
\documentclass[10pt,twocolumn,letterpaper]{article}

\usepackage[pagenumbers]{cvpr}      

\usepackage{graphicx}
\usepackage{amsmath}
\usepackage{amssymb}
\usepackage{amsfonts}
\usepackage{amsthm}
\usepackage{booktabs,multirow,enumitem,tabularx}
\usepackage{xcolor}
\usepackage{colortbl}
\usepackage{algorithm}%
\usepackage{algpseudocode}
\newtheorem{proposition}{Proposition} 

\definecolor{lightgray}{rgb}{0.95,0.95,0.95}

%
\usepackage[pagebackref,breaklinks,colorlinks]{hyperref}

\newcommand{\yb}[0]{\textbf{y}}
\newcommand{\xb}[0]{\textbf{x}}

\usepackage[capitalize]{cleveref}
\crefname{section}{Sec.}{Secs.}
\Crefname{section}{Section}{Sections}
\Crefname{table}{Table}{Tables}
\crefname{table}{Tab.}{Tabs.}


\begin{document}

\title{Layer-Stack Temperature Scaling}

\author{Amr Khalifa\thanks{Equal contribution.} \,\thanks{Work performed as part of the AI Residency program at Google. }\,,\; Michael C. Mozer,\; Hanie Sedghi, \;Behnam Neyshabur, \;Ibrahim Alabdulmohsin\footnotemark[1]\\
Google Research, Brain Team\\
{\tt\small \{amrkhalifa,mcmozer,hsedghi,neyshabur,ibomohsin\}@google.com}
}
\maketitle

\begin{abstract}
Recent works demonstrate that early layers in a neural network contain useful information for prediction. Inspired by this, we show that extending temperature scaling across all layers improves both calibration and accuracy. We call this procedure ``layer-stack temperature scaling'' (LATES). Informally, LATES grants each layer a weighted vote during inference. We evaluate it on five popular convolutional neural network architectures both in- and out-of-distribution and observe a consistent improvement over temperature scaling in terms of accuracy, calibration, and AUC. All conclusions are supported by comprehensive statistical analyses. Since LATES neither retrains the architecture nor introduces many more parameters, its advantages can be reaped without requiring additional data beyond what is used in temperature scaling. Finally, we show that combining LATES with Monte Carlo Dropout matches state-of-the-art results on CIFAR10/100.
\end{abstract}

\section{Introduction}
\label{sec:intro}
\paragraph{Motivation.}Recent work in computer vision relates the difficulty of an example to its activations at the \emph{intermediate} layers of the neural network.  For example, when linear classifiers (a.k.a. ``probes'' \cite{alain2016understanding,evci2022head2toe}), are trained on the intermediate activations, it is observed that easy examples tend to be classified correctly by all layers while difficult examples are classified correctly by the last layers alone \cite{cohen2018dnn,baldock2021deep}. In \cite{alain2016understanding}, it is noted that the linear separability of classes increases monotonically with depth, suggesting that early layers might be solving a simpler version of the prediction task. This agrees with other conclusions in early-exit strategies in computer vision \cite{teerapittayanon2016branchynet,huang2017multi} in which less compute is used at inference time for easy examples. Along similar lines, it has been noted when studying transfer learning \cite{raghu2019transfusion,yosinski2014transferable}, memorization \cite{arpit2017closer,cohen2018dnn}, and pretraining with random labels \cite{maennel2020neural} that early layers tend to learn general-purpose representations, which are useful for most examples such as edge detectors, whereas later layers specialize.

The previous observations point to an intriguing hypothesis: does performance (e.g. in terms of calibration and accuracy) improve when \emph{multiple} layers in the neural network vote during inference but not necessarily equally? One recent evidence that supports this hypothesis was demonstrated in the context of transfer learning, in which probing the intermediate layers was found to be a useful alternative to fine-tuning pretrained models \cite{evci2022head2toe}. 

In this work, we introduce ``layer-stack temperature scaling '' (LATES), which extends temperature scaling to intermediate layers in the neural network and aggregates their predictions using the \emph{stacking} procedure \cite{wolpert1992stacked}. Similar to temperature scaling, LATES is a post-hoc method that does not retrain the model nor introduces many more parameters; it only increases the total parameter count by a meager 1\%. Unlike temperature scaling, however, LATES improves accuracy, calibration and area under the ROC curve (AUC). In fact, LATES leads to better calibration even when compared to temperature scaling both in- and out-of-distribution (OOD), particularly in the small-data regime. Moreover, combining LATES with Monte Carlo Dropout \cite{gal2016dropout} matches state-of-the-art results on CIFAR10/100 as reported by the Uncertainty Baselines Benchmark~\cite{nado2021uncertainty}. 

\paragraph{Overview.}Before introducing layer-stack temperature scaling (LATES), it is useful to recall first how temperature scaling \cite{guo2017calibration} works. Temperature scaling is a post-hoc method for calibrating neural networks. It is an extension of the classical Platt scaling algorithm \cite{platt1999probabilistic} to the multiclass setting. It rescales the logits by a scalar $\tau\in\mathbb{R}^+$ chosen according to a separate validation dataset to optimize a \emph{proper} scoring rule, where proper scores are those that are guaranteed to yield a perfectly calibrated predictor at their minima at the infinite data limit \cite{gneiting2007strictly}. Proper scores include, for example, the negative log-likelihood (a.k.a. cross-entropy or log loss) and the Brier score (a.k.a. square loss). Despite its simplicity, temperature scaling yields competitive results in practice for in-distribution data \cite{guo2017calibration,can_u_trust_classifiers,NEURIPS2019_8ca01ea9} but can fail under modest levels of distribution shifts \cite{can_u_trust_classifiers}.

\begin{figure*}
    \centering
    \includegraphics[width=1.65\columnwidth]{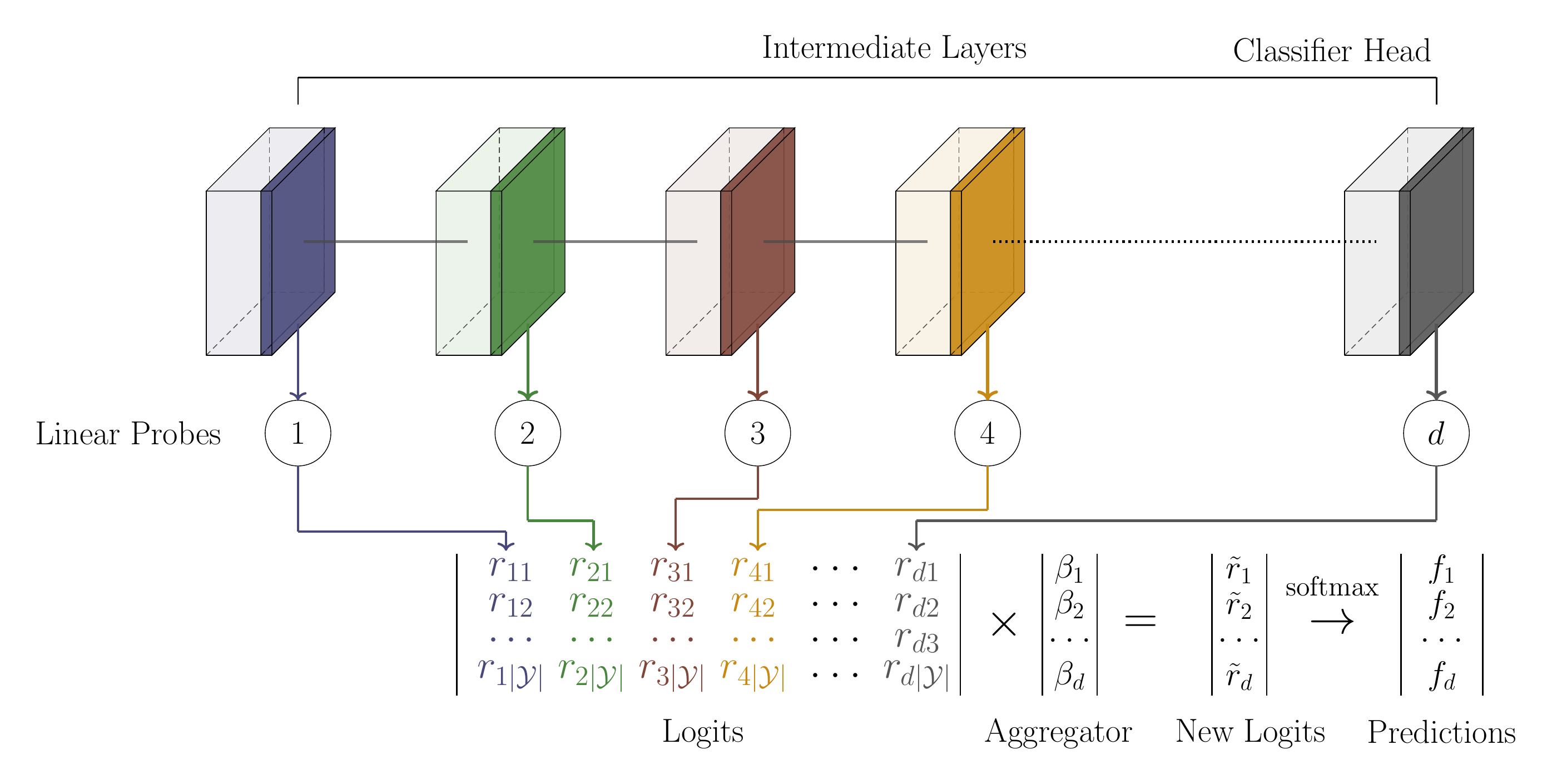}
    \caption{In probe scaling, $d$ linear probes are injected into the intermediate representations of the neural network. Each probe is trained to maximize a proper score, such as the log-loss or square loss, where the loss is chosen according to the desired measure of calibration (e.g. log-likelihood or Brier). Afterward, the logits of all probes are concatenated together to form a new representation. Finally, an aggregator (linear classifier) with $d$ temperature parameters (shown as $\beta$'s) is trained to output the probability scores.}
    \label{fig:probe_scaling}
\end{figure*}

\begin{figure}
    \includegraphics[width=\columnwidth]{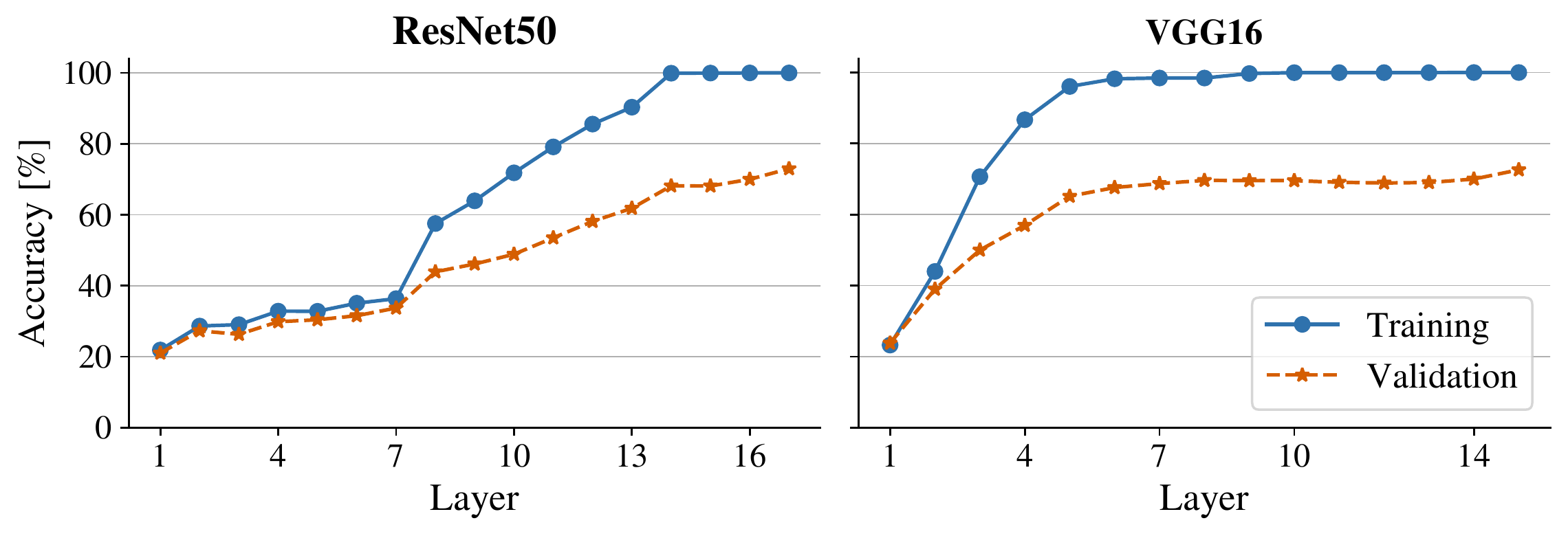}
    \includegraphics[width=\columnwidth]{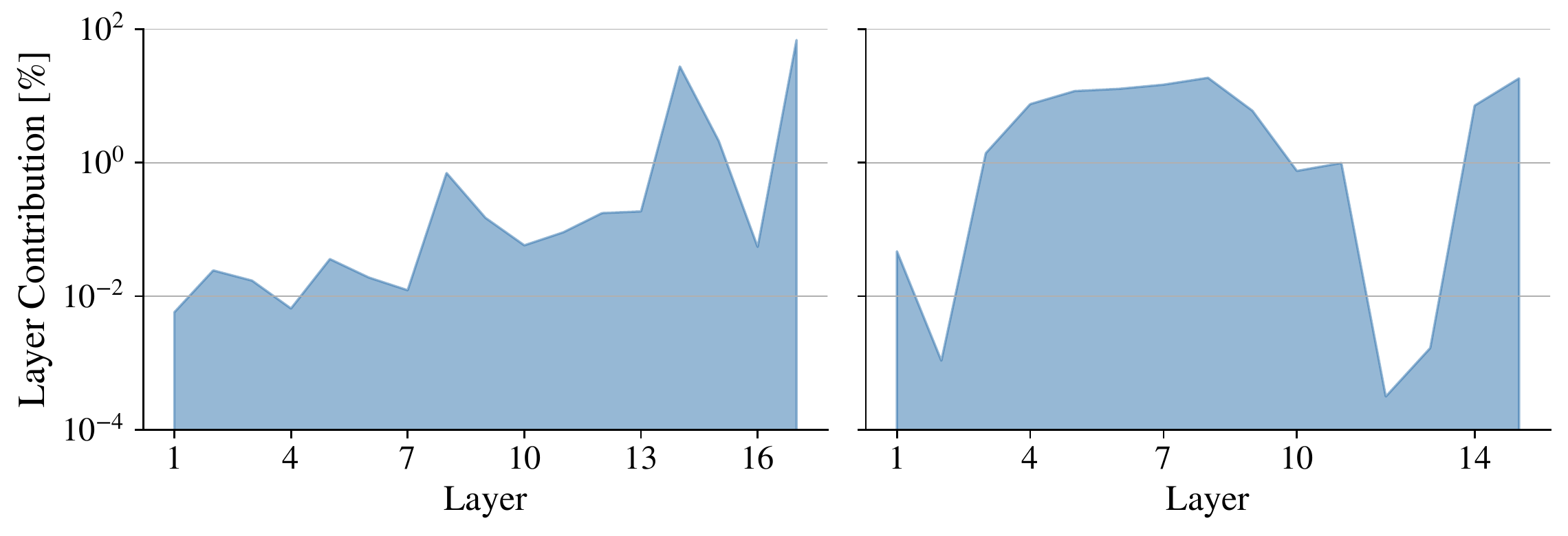}
    \includegraphics[width=\columnwidth]{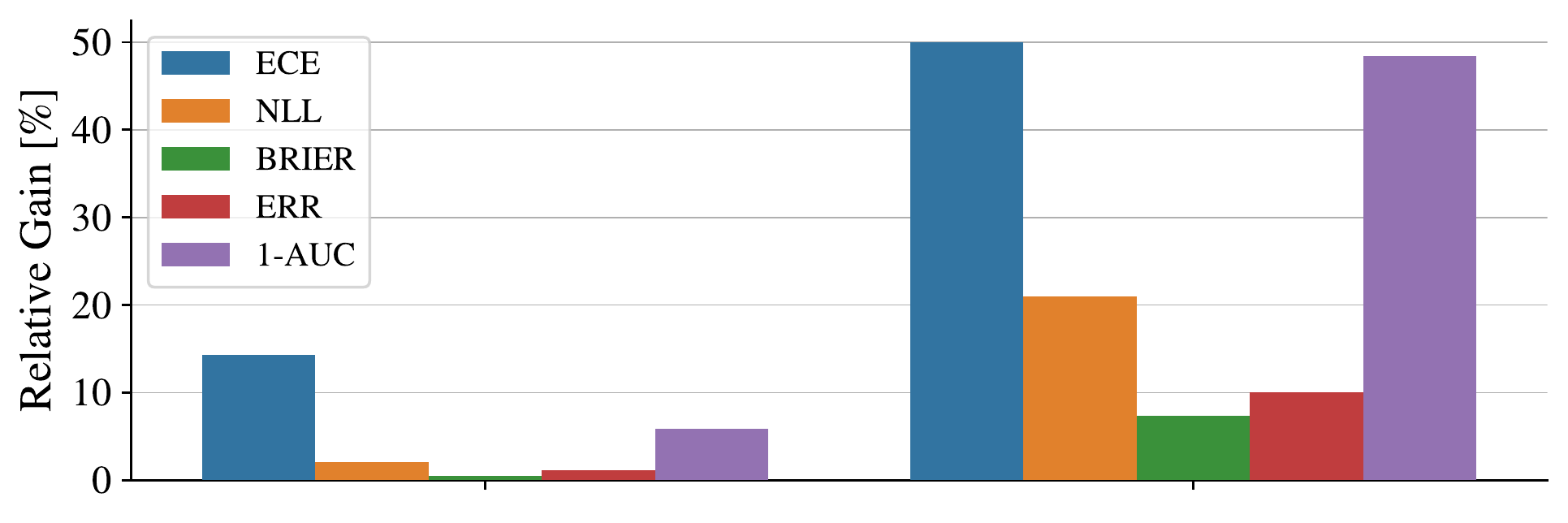}
    \caption{A sample of results using  RESNET50 (left column) and VGG16 (right column) on CIFAR100. {\bf TOP}: accuracy of each probe plotted against its position in the neural network. {\bf MIDDLE}: The contribution of each layer in LATES; see Section \ref{sec:intro} for definition.  {\bf BOTTOM}: the relative gain [\%] (i.e. the percentage of reduction) of LATES compared to temperature scaling. }
    \label{fig:summary}
\end{figure}

In LATES, on the other hand, temperature scaling is extended to \emph{intermediate} layers of the neural network as shown in Figure \ref{fig:probe_scaling}. First, given a deep neural network that can be partitioned into $d$ blocks (e.g. convolutional blocks in ResNet \cite{He2016} and VGG \cite{simonyan2014very}  or encoder blocks in vision transformers (ViT) \cite{dosovitskiy2021vit}), $d$ linear probes are trained on the intermediate representations. Specifically, the $k$-th  linear probe is trained to predict the correct class $\yb\in\mathcal{Y}$ given the activations at block $k$ for each instance $\xb\in\mathcal{X}$. Pooling can be added to reduce the dimensionality of the flattened representation. In total, $d-1$ linear probes are trained while the $d$-th probe is an identity map that preserves the logits of the original classifier. Finally, the logits of all probes are concatenated to form a new representation in the logit space with $|\mathcal{Y}|\times d$ features as shown in Figure \ref{fig:probe_scaling}.  Full pseudocode is provided in Algorithm \ref{algorithm}.

\paragraph{Relation to Temperature Scaling.}Compared to temperature scaling, LATES improves \emph{both} accuracy and calibration as demonstrated in Figure \ref{fig:summary}. However, calibration can be measured in various ways. Depending on the desired measure of calibration, a corresponding loss function is chosen to train the aggregator in LATES. For example, if the desired measure of calibration is the negative-log-likelihood (NLL), the cross-entropy loss is minimized. By contrast, the square loss is minimized if calibration is measured using the Brier score \cite{brier1950verification}. Soft differentiable losses also exist for the expected calibration error (ECE) \cite{karandikar2021soft}. 

Clearly, LATES reduces to temperature scaling when $\beta_k=0$ for all $k<d$. In Section \ref{sect::theory}, we prove that temperature scaling cannot perform better than LATES with a high probability when evaluated on the loss being optimized.  Experimentally, LATES often performs \emph{strictly} better, including on metrics it is not optimized for; e.g. both ECE and AUC improve when optimizing for NLL. This is true even though LATES does \emph{not} require additional data beyond what is often used in temperature scaling. Note that the number of parameters used by the aggregator in Figure \ref{fig:probe_scaling} is quite small; e.g. does not exceed 20 parameters in all of our experiments. Thus, the same validation split used for temperature scaling is sufficient to train the aggregator as well\footnote{The linear probes are trained on the original training split so they do not require additional data, either.}. We provide further analysis of this in Section \ref{sect::exp_in_d}.

We emphasize that the improvement in LATES happens even when the original model is trained until convergence. Figure \ref{fig:summary}~({\sc top}) displays the accuracy of each probe for ResNet50 and VGG16  trained on CIFAR100 with left/right augmentation. Figure \ref{fig:summary}~({\sc middle}) highlights the contributions of each layer learned during stacking in LATES, where the contribution of layer $k$ is defined by $\beta_k/\sum_i\beta_i$. Clearly, intermediate layers contain useful information to improve performance. A key takeaway is that the original model achieves perfect training accuracy and the last layer achieves the highest validation accuracy, yet LATES improves the ECE and log-loss compared to temperature scaling as shown in Figure \ref{fig:summary} ({\sc bottom}). 

In this work, we compare LATES against temperature scaling in CNNs. We use VGG16 and VGG19 \cite{simonyan2014very} as a sample of architectures without residual links, and use ResNet18, ResNet50 \cite{He2016}, and Wide-ResNet-28-10 \cite{zagoruyko2016wide} as a sample of architectures with residual links. We follow the Uncertainty Baselines Benchmark~\cite{nado2021uncertainty} and focus our analysis on CIFAR10/100 with several corruption types added during inference of various severity levels \cite{Krizhevsky09learningmultiple}. We use ImageNet-ILSVRC2012 \cite{deng2009imagenet} and ResNet50 \cite{He2016} to assess if such conclusions carry over to larger datasets.

\paragraph{Parameter Count and Time.}
The total compute overhead LATES contributes to inference on the network is less than 0.7\% (in GFlops), measured on ResNet50. Training all probes is  parallelizable (all probes are trained together as a single model) and adds, on average, less than 25\% more training time. In terms of parameter count, the total number of parameters introduced by LATES does not exceed 1\% of the original architecture.

\paragraph{Statement of Contribution.}We summarize, next, our contribution and findings. First, we introduce and theoretically motivate layer-stack temperature scaling (LATES) as an extension of temperature scaling to all layers in the neural network. It outperforms temperature scaling both in- and out-of-distribution (OOD) in terms of accuracy, calibration, and AUC, particularly in the small-data regime and OOD. Second, we support all of our conclusions by a comprehensive statistical analysis (see Section \ref{sect:stat}). Third, we show that combining LATES with Monte Carlo Dropout \cite{gal2016dropout} matches state-of-the-art results on CIFAR10/100.

\paragraph{Summary of Notation.}We use boldface letters \textbf{x} for random variables, small letters $x$ for non-random numbers (e.g. constants or instances or random variables), capital letters $X$ for matrices, and calligraphic typeface $\mathcal{X}$ for sets. We write $[N]$ for the set $\{0, 1, 2, \ldots, N-1\}$ and denote the probability simplex in $\mathbb{R}^d$ by $\Delta_{d-1}$.
\begin{algorithm}[tb]
 \small
 \vspace{1mm}
 \textbf{Input:} (1) Trained neural network with a chosen sequence of $d-1\ge 0$ blocks; (2) Hold-out data; (3) loss  $l:\Delta_{K-1}\times \mathcal{Y}\to\mathbb{R}$.
 \vspace{1mm}
 
 \textbf{Output:} Calibrated model.
 \vspace{1mm}
 
 \textbf{Training:}
 \vspace{1mm}
    \begin{algorithmic}[1]
     \State Insert $d-1$ linear probes, where probe $k$ is trained to predict the target $\yb\in\mathcal{Y}$ given the activations in block $k$.
     \State Concatenate the logits of all $d-1$ probes in addition to the logits of the original neural network, forming a new representation with $|\mathcal{Y}|\times d$ features. Let:
     \begin{equation*}
         R(\xb) = \begin{bmatrix}
         r_{1,1} & r_{2,1} & \cdots & r_{d,1}\\
         r_{1,2} & r_{2,2} & \cdots & r_{d,2}\\
         \cdots & \cdots & \cdots & \cdots \\
         r_{1,|\mathcal{Y}|} & r_{2,|\mathcal{Y}|} & \cdots & r_{d,|\mathcal{Y}|}\\
         \end{bmatrix}
     \end{equation*}
     be the new representation, where $r_{i,y}$ is the logit of the $i$-th probe for the class $y$.
     \State Initialize $\beta_0 = (0,0,\ldots,0,1)^T\in\mathbb{R}^d$. 
     \State
     Train an aggregator using the projected stochastic gradient descent (SGD) that minimizes:
     \begin{equation*}
         \mathbb{E}_{\xb,\yb} \Big[l\big(\text{softmax}\big(R(\xb)\,\beta\big),\;\yb\big)\Big],
     \end{equation*}
     over the $d$ parameters $\beta\in\mathbb{R}^d$ subject to the constraint $\beta\ge 0$, and starting from the initial value $\beta_0$.\vspace{1em}
    \end{algorithmic}
\caption{Pseudocode of Probe Scaling}\label{algorithm}
\end{algorithm}

\section{Preliminaries and Related Work}\label{sect:related}

\paragraph{Preliminaries.}Besides classification accuracy, improving calibration is arguably equally important. Calibration is a measure of how accurate a model is in estimating its own uncertainty. It is an important metric for evaluating predictive models, particularly in critical domains, such as medical diagnosis, where decisions depend on both the model's confidence and the misclassification cost. They are also useful in \emph{selective predictions} (a.k.a. reject option classifiers); e.g. by referring to consultants for second opinions or by supplying additional information such as lab tests, among others \cite{kompa2021second,geifman2017selective,vickers2016net}. In addition, calibration serves an essential role when debiasing trained models via post-processing, in which group-specific thresholding rules are adjusted to maximize accuracy subject to prescribed fairness guarantees \cite{corbett2017algorithmic,menon2018cost,celis2019classification,alabdulmohsin2021near}.

Informally, a model is said to be \emph{calibrated} if its predicted probabilities closely agree with the observed empirical frequencies. Formally speaking, writing $\mathcal{X}$ for the instance space (e.g. images), $\mathcal{Y}=[K]$ for the target set, and $\Delta_{K-1}$ for the probability simplex in $\mathbb{R}^K$, a probabilistic multiclass classifier $f:\mathcal{X}\to\Delta_{K-1}$ is calibrated if \cite{NEURIPS2019_8ca01ea9}:
\begin{equation}\label{eq::calibraiton_def}
    \forall y\in\mathcal{Y}: \; p(\yb = y \,| f(\xb)=q) = q_y,
\end{equation}
where $q_y$ is the $y$-th coordinate of the probability distribution $q\in\Delta_{K-1}$. 
Often, this definition is relaxed to focus on the model's top prediction alone \cite{naeini2015,nixon2019measuring,guo2017calibration,can_u_trust_classifiers} (see Section \ref{sect:related}).

Unfortunately, several lines of work observe that modern neural networks lack such calibration \cite{guo2017calibration,thulasidasan2019mixup,can_u_trust_classifiers,kumar2019calibration,NEURIPS2019_8ca01ea9}, particularly under distribution shift \cite{can_u_trust_classifiers}. To remedy this problem, various approaches have been proposed including Monte Carlo dropout \cite{gal2016dropout}, latent Gaussian processes \cite{wenger2020non}, deep ensembles \cite{lakshminarayanan2016simple}, training multiple independent subnetworks \cite{havasi2020training}, and augmentation \cite{thulasidasan2019mixup}. A typical approach in these methods is to provide \emph{diversity} by generating multiple predictions for the same instance $\xb\in\mathcal{X}$. In addition, there are regularization-based methods, such as the Maximum Mean Calibration Error (MMCE), which adds a penalty term to the loss that is analogous to the Maximum Mean Discrepancy (MMD) method  \cite{kumar2018trainable}. Moreover, there are Bayesian methods, such as Stochastic Weight Average Gaussian (SWAG), which estimates uncertainty by sampling  from the posterior distribution of the weights \cite{maddox2019simple}. Finally, several post-hoc methods have been proposed, such as isotonic regression \cite{zadrozny2002transforming}, which learns a monotonic transformation of the model's predictions, the splines-based approach in \cite{gupta2020calibration}, which approximates cumulative distribution functions using splines and estimates uncertainty using derivatives, and the histogram binning approach in \cite{patel2020multi}, which calibrates by maximizing the mutual information. 

In \cite{liu2020simple}, the spectral normalized Gaussian process method is introduced, in which the classifier's head is replaced with a Gaussian process and Augmentation Mixup (AugMix) is used. It achieves state-of-the-art results on CIFAR10 and CIFAR100. We show that by combining LATES with Monte Carlo Dropout \cite{gal2016dropout}, similar results are achieved even with very limited augmentation (only left/right flipping). 

\paragraph{Measures of Calibration}
To evaluate calibration, several scores have been introduced in the literature. As mentioned earlier, the choice of the loss in LATES can depend on the desired measure of calibration, such as by minimizing the log-loss if calibration is measured using the negative log-likelihood (NLL). We describe three popular options, next. 

One common score for calibration is the expected calibration error (ECE), which in its confidence-based version is given by \cite{naeini2015,guo2017calibration,nixon2019measuring}:
\begin{equation}
    \mathrm{ECE} = \sum_{j=0}^{m-1} \frac{|B_j|}{n}\,\big|\mathrm{acc}(B_j) \,-\,p(B_j)\big|,
\end{equation}
where $\{B_j\}_{j\in[m]}$ are the bins, $|B_j|$ is the size of the bin, $acc(B_j)$ is the model's accuracy within the bin, and $p(B_j)$ is the model's average confidence. To optimize ECE in LATES, differentiable approximations can be used \cite{karandikar2021soft}.

While popular, ECE has shortcomings. It is  sensitive to the number of chosen bins \cite{nixon2019measuring} and biased \cite{kumar2019verified}. In addition, it omits important performance indicators that are captured by proper scoring rules. In \cite{kull2015novel}, proper scoring rules are shown to decomposed into a sum of three terms: calibration, group loss, and irreducible loss. ECE captures the first term \emph{alone}, thus permitting a discrepancy between the calibration probabilities and the true posterior probabilities even when the model enjoys a perfect accuracy and ECE \cite{kull2015novel}. We illustrate this pitfall of ECE in Figure \ref{fig:ece_fails}.

\begin{figure}
    \centering
    \includegraphics[width=0.9\columnwidth]{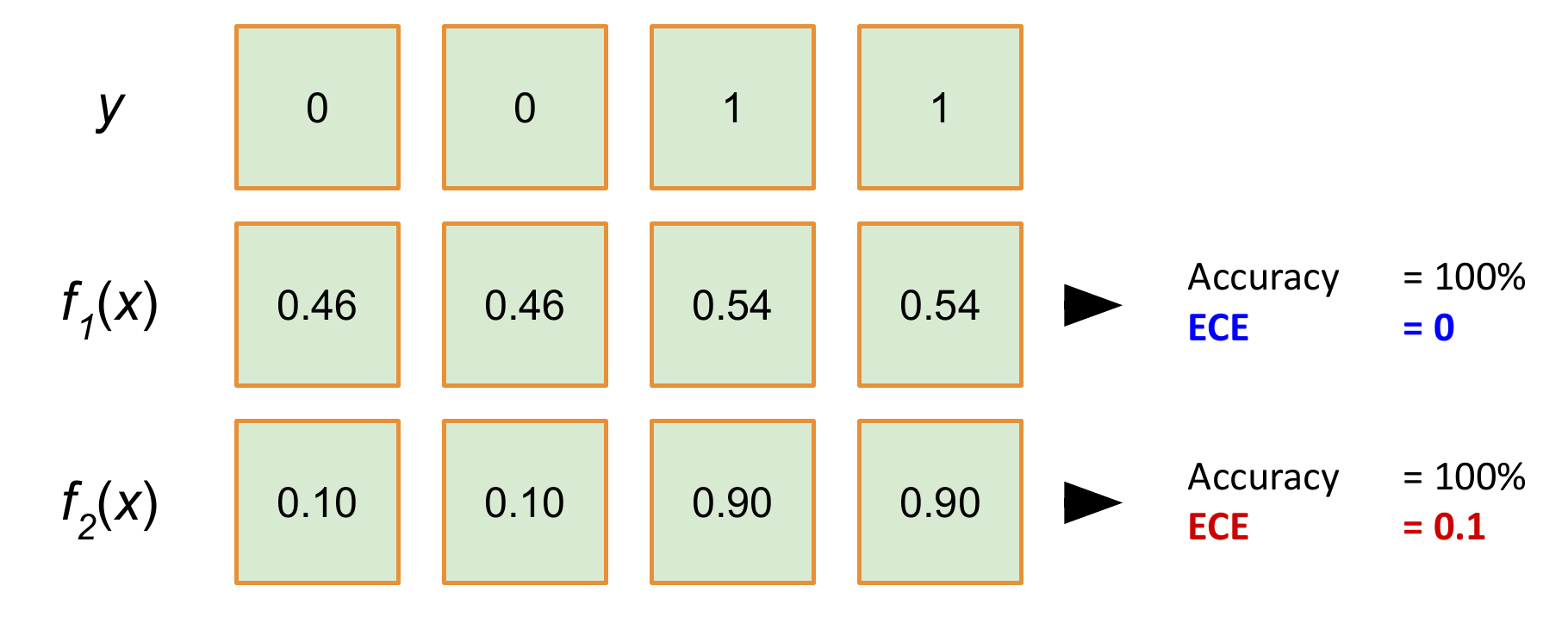}
    \caption{An illustration of one of the pitfalls of using ECE with equal-width binning even when combined with accuracy. In this example, we have four training data points with true labels $\{0, 0, 1, 1\}$ and two predictors $f_1(x)$ and $f_2(x)$, which make the predictions $p(\yb=+1|\xb)$ shown in the middle and bottom rows respectively. Even though $f_2(x)$ has a perfect accuracy and is clearly better than $f_1(x)$, it performs worse than $f_1(x)$ according to ECE with ten equal-width bins.}
    \label{fig:ece_fails}
\end{figure}

To circumvent the limitations of ECE, proper scores can be used. As discussed earlier, a scoring function is called proper if it is optimized by the true probability distribution, and it is called \emph{strictly proper} if it has a unique optima \cite{gneiting2007strictly}. One example is the Brier score \cite{brier1950verification}:
\begin{equation}\label{eq:brierscore}
    \mathrm{Brier} = \mathbb{E}_{\xb,\yb}\big[\sum_{y\in\mathcal{Y}}p(y\,|\,\xb)^2 \,-\,2p(\yb\,|\,\xb)\big],
\end{equation}
where $\yb$ is the true label and $p(y\,|\,\xb)$ is the probability assigned by the model to the class $y\in\mathcal{Y}$. Note that the Brier score is an affine transformation of the square loss and can be negative when Equation \ref{eq:brierscore} is used. In LATES, the Brier score can be optimized by minimizing the square loss. 

Besides, the negative log-likelihood (NLL), a.k.a. cross entropy or log-loss, is a second proper score, which assigns a more aggressive penalty than the Brier score for events that occur when their predicted probabilities were quite small \cite{bickel2007some}. It is optimized in LATES by minimizing the cross-entropy loss. In our evaluation, we follow this approach since this corresponds to what is commonly used in temperature scaling. While this emphasizes on the log-loss, we also report ECE, Brier, accuracy, and AUC. 

\paragraph{Probes.}Probes were introduced in \cite{alain2016understanding} to study the intermediate representations of neural networks, and were subsequently used in several works \cite{evci2022head2toe,alain2016understanding,baldock2021deep,cohen2018dnn}. Originally, the aim of having such probes was to improve our understanding of neural networks. Our contribution is different in that we show how linear probes provide a means of extending temperature scaling across intermediate layers of the neural network to improve both calibration and accuracy. 

Several works apply intermediate classifiers after various neural network layers. This includes, for example, Deeply-Supervised Nets \cite{lee2015deeply} and anytime classification networks \cite{bolukbasi2017adaptive,huang2017multi,teerapittayanon2016branchynet}. However, these works do not use intermediate classifiers to improve calibration and accuracy.

\section{Theoretical Justification}\label{sect::theory}
In this section, we present a formal theoretical justification for LATES. To simplify exposition, we focus on the binary classification setting. The following theorem establishes the advantage of LATES over temperature scaling when LATES is trained to optimize a proper, convex, Lipschitz scoring function, such as the log-loss in bounded domains. As stated earlier, the choice of the loss optimized in LATES can depend on the desired measure of calibration.

\begin{proposition}\label{prop}
Let $f:\mathbb{R}^{d}\times\mathcal{X}\times\mathcal{Y}\to\mathbb{R}$ be a scoring function that is convex and $\rho$-Lipschitz continuous on its first argument. For a given domain $\mathcal{X}$ and a target set $\mathcal{Y}$, denote by $\textbf{S}\in(\mathcal{X}\times\mathcal{Y})^n\sim\mathcal{D}^n$ a hold-out dataset of size $n$. Let $\beta_{L}$ be the aggregator weights in LATES that minimize the regularized empirical loss on the hold-out sample:
\begin{equation}
    \beta_{L} = \arg\min_{\beta\in\mathbb{R}^d} \mathbb{E}_{\xb,\yb \in \textbf{S}}\big[(\lambda/2)\,||\beta||^2 + f(\beta,\,\xb,\,\yb)\big],
\end{equation}
Let $\beta^\star$ be the optimal LATES parameters at the limit $n\to\infty$. Then, with probability of at most $\delta \le \frac{1}{\epsilon}\big(\lambda||\beta^\star||^2 + \frac{2\rho^2}{\lambda n}\big)$ over the choice of \textbf{S}, one has for any weights $\beta$:
\begin{equation}
\mathbb{E}_{\xb,\yb\in\mathcal{D}}\big[f(\beta, \xb, \yb)\big] \le \mathbb{E}_{\xb,\yb\in\mathcal{D}}\big[f(\beta_{L}, \xb, \yb)\big] - \epsilon.
\end{equation} 
\end{proposition}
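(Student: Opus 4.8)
The plan is to reduce the stated high-probability guarantee to a bound on the \emph{expected excess risk} of the regularized empirical minimizer $\beta_L$, and then apply Markov's inequality. Write $L(\beta) = \mathbb{E}_{\mathcal{D}}[f(\beta,\xb,\yb)]$ for the population risk and $\hat{L}(\beta) = \mathbb{E}_{\xb,\yb\in\textbf{S}}[f(\beta,\xb,\yb)]$ for the empirical risk, so that $\beta_L = \arg\min_\beta\{(\lambda/2)\|\beta\|^2 + \hat{L}(\beta)\}$ and $\beta^\star = \arg\min_\beta L(\beta)$ is the population optimum (the $n\to\infty$ limit). Since $\beta^\star$ minimizes the population risk, every $\beta$ satisfies $L(\beta)\ge L(\beta^\star)$, so the ``bad'' event $\{L(\beta)\le L(\beta_L)-\epsilon\}$ is contained in the $\beta$-independent event $\{L(\beta_L)-L(\beta^\star)\ge\epsilon\}$; this is precisely what lets a single $\delta$ cover \emph{any} $\beta$ uniformly. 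Because $L(\beta_L)-L(\beta^\star)\ge 0$, Markov's inequality gives
\[
\Pr_{\textbf{S}}\big[L(\beta_L)-L(\beta^\star)\ge\epsilon\big] \le \frac{1}{\epsilon}\,\mathbb{E}_{\textbf{S}}\big[L(\beta_L)-L(\beta^\star)\big],
\]
and it remains only to control the expected excess risk by $\lambda\|\beta^\star\|^2 + 2\rho^2/(\lambda n)$.

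For the expected excess risk I would invoke the stability-based oracle inequality for regularized loss minimization, which rests on two ingredients. First, optimality of $\beta_L$ for the regularized empirical objective yields $\hat{L}(\beta_L)+(\lambda/2)\|\beta_L\|^2 \le \hat{L}(\beta^\star)+(\lambda/2)\|\beta^\star\|^2$; taking expectations over $\textbf{S}$, using $\mathbb{E}_{\textbf{S}}[\hat{L}(\beta^\star)]=L(\beta^\star)$ (as $\beta^\star$ is independent of $\textbf{S}$), and discarding the nonnegative term $(\lambda/2)\mathbb{E}\|\beta_L\|^2$ bounds $\mathbb{E}_{\textbf{S}}[\hat{L}(\beta_L)]$ by $L(\beta^\star)+(\lambda/2)\|\beta^\star\|^2$. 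Second, a uniform-stability bound converts the empirical risk of $\beta_L$ back into its population risk, $\mathbb{E}_{\textbf{S}}[L(\beta_L)-\hat{L}(\beta_L)] \le 2\rho^2/(\lambda n)$. Chaining the two gives $\mathbb{E}_{\textbf{S}}[L(\beta_L)] \le L(\beta^\star) + \lambda\|\beta^\star\|^2 + 2\rho^2/(\lambda n)$ once the constants from the stability step are collapsed into the stated coefficients, and combining this with the Markov step above produces exactly the claimed $\delta$.

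The crux is the stability bound, and it is where the convexity and $\rho$-Lipschitz hypotheses on $f$ are essential. I would establish it by the standard replace-one argument: the objective is $\lambda$-strongly convex because of the $(\lambda/2)\|\beta\|^2$ term, so strong convexity lower-bounds the increase of the objective at the perturbed minimizer while $\rho$-Lipschitzness upper-bounds the change caused by swapping a single summand; balancing the two forces the minimizer to move by at most $O(\rho/(\lambda n))$ in norm when one sample of $\textbf{S}$ is replaced. A further application of $\rho$-Lipschitzness translates this $O(\rho/(\lambda n))$ parameter displacement into an $O(\rho^2/(\lambda n))$ change in loss, and averaging the identity that equates on-average replace-one stability with the expected generalization gap delivers the $2\rho^2/(\lambda n)$ bound. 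The remaining pieces (the optimality comparison and Markov) are routine; the only real care needed is tracking the exact factors in the strong-convexity-to-stability conversion so they reconcile with the coefficients $\lambda\|\beta^\star\|^2$ and $2\rho^2/(\lambda n)$ stated in the proposition.
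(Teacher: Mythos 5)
Your proposal is correct and follows essentially the same route as the paper: the event containment via optimality of $\beta^\star$, Markov's inequality on the nonnegative excess risk, and the oracle inequality $\mathbb{E}_{\textbf{S}}[L(\beta_L)] \le L(\beta^\star) + \lambda\|\beta^\star\|^2 + 2\rho^2/(\lambda n)$ for regularized loss minimization with convex $\rho$-Lipschitz losses. The only difference is that the paper simply cites this oracle inequality (Corollary 13.8 of Shalev-Shwartz and Ben-David) whereas you sketch its stability-based derivation, which is the standard proof of that corollary.
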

\begin{proof}
The proof of this proposition uses Corollary 13.8 in \cite{shalev2014understanding}. Let $L_S(\beta)$ be the empirical loss w.r.t. $f$ and write $L_{\mathcal{D}}(\beta)$ for the population loss of $\beta$. If $f$ is a convex function that is $\rho$-Lipschitz continuous, then the solution to the regularized empirical risk minimization problem:
\begin{equation*}
    \hat{\beta} = \arg\min_{w} \big\{(\lambda/2)||w||^2 + L_S(w)\big\},
\end{equation*}
satisfies the oracle inequality:
\begin{equation*}
    \forall {w}:\quad \mathbb{E}_S\big[L_{\mathcal{D}}(\hat{\beta}\big] \le L_{\mathcal{D}}(w) + \lambda ||w||^2 + \frac{2\rho^2}{\lambda n}.
\end{equation*}
Choosing $w = \beta^\star \doteq \arg\min_w\{L_{\mathcal{D}}(w)\}$:
\begin{equation*}
\mathbb{E}_S\big[L_{\mathcal{D}}(\hat{\beta})\big]  \le L_{\mathcal{D}}(\beta^\star) + \lambda ||\beta^\star||^2 + \frac{2\rho^2}{\lambda n}.
\end{equation*}

Using the definition of $\beta^\star$ and applying Markov's inequality, we have for any $\beta$:
\begin{align*}
    p_S\Big\{L_{\mathcal{D}}(\hat{\beta}) - L_{\mathcal{D}}(\beta) \ge \epsilon\Big\} &\le 
    p_S\Big\{L_{\mathcal{D}}(\hat{\beta}) - L_{\mathcal{D}}(\beta^\star) \ge \epsilon\Big\} \\
    &\le \frac{1}{\epsilon} \Big( \lambda\,||\beta^\star||^2 + \frac{2\rho^2}{\lambda n}\Big)
\end{align*}

Setting $\hat{\beta}$ for the solution learned by probe scaling and $\beta$ for temperature scaling (which is valid because temperature scaling belongs to the search space of probe scaling), we obtain the statement of the proposition.
\end{proof}

One implication of Proposition \ref{prop} is that by setting $\lambda = \Theta(1/\sqrt{n})$, the probability bound $(||\beta^\star||^2 + 2\rho^2)/(\epsilon\sqrt{n})$ can be made arbitrarily small for a sufficiently large $n$ for any fixed $\epsilon>0$. Since temperature scaling corresponds to the particular choice of $\beta_k=0$ for all $k<d$, where $d$ is the total number of probes, temperature scaling cannot perform better than probe scaling with a high probability.

\section{Empirical Evaluation}\label{sect::exp_in_d}

\begin{figure*}
    \centering
    \includegraphics[width=2\columnwidth]{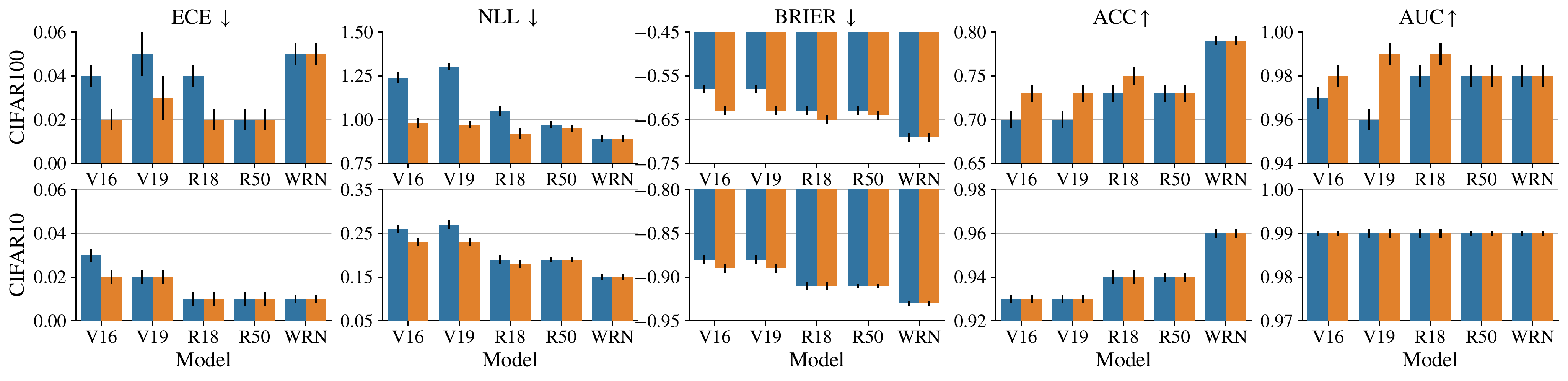}
    \caption{A comparison between LATES (in {\color{brown}\bf amber}) and temperature scaling (in {\color{blue}\bf blue}) on both CIFAR100 (top) and CIFAR10 (bottom) \cite{Krizhevsky09learningmultiple} for the five metrics: ECE, negative log-likelihood (NLL), Brier score, accuracy (ACC), and AUC. The five architectures are: {\sc vgg16} ({\sc v16}), {\sc vgg19} ({\sc v19}), {\sc resnet18} ({\sc r18}), {\sc resnet50} ({\sc r50}), and {\sc wide-resnet-28-10} ({\sc wrn}).}
    \label{fig:cifarind}
\end{figure*}

\subsection{Experiment Setup}
\paragraph{Datasets and Architectures.}
In our experiments, we follow the Uncertainty Baselines Benchmark~\cite{nado2021uncertainty} and focus our analysis on CIFAR10/100 \cite{Krizhevsky09learningmultiple} and ImageNet-ILSVRC2012 \cite{deng2009imagenet}  with several corruption types added during inference of various severity levels. We use five architectures: VGG16 and VGG19 \cite{simonyan2014very}, ResNet18 and ResNet50 \cite{He2016}, and Wide-ResNet-28-10 \cite{zagoruyko2016wide} to examine the impact of both architecture size and residual links in convolutional neural networks. Examples of corruption types include Gaussian noise, JPEG compression, spattering, and blurring, among others. See \cite{nado2021uncertainty} for details.

\paragraph{Training Setup.}To calibrate neural networks using either temperature scaling or LATES, we split the data into three parts: (1) \emph{training}, 90\% of standard training split, (2) \emph{hold-out}, 10\% of standard training split, and (3) \emph{test}. The original architecture and probes are trained on the training split while the calibration rule is trained on the hold-out data.

We train the baseline model with $\ell_{2}$ regularization of $10^{-4}$ for 200 epochs on CIFAR10/100, with a batch size of 128 using SGD with learning rate 0.1 and momentum 0.9. We decay the learning rate using cosine scheduling \cite{loshchilov2016sgdr}. In ImageNet-ILSVRC2012, we train for 90 epochs in addition to a 5-epoch warm-up phase using a batch size of 256 with initial learning rate of 0.1 with $10\times$ decay at 30\%, 60\%, and 90\% of the training steps. On the input pipeline, we standardize the input to have values in the unit interval $[0,\,1]$ and use only right-left random flipping as data augmentation. We train on NVIDIA Tesla V100 GPUs for CIFAR10/100 and on TPUs for ImageNet-ILSVRC2012.

In LATES, we flatten the intermediate activations and reduce dimensionality using average pooling. We place a probe after every convolutional block in ResNet and Wide-ResNet architectures, and after every convolutional layer in VGG16/19. This results in 8 probes in ResNet18, 16 probes in ResNet50, 12 probes in Wide-ResNet-28-10, 14 probes in VGG16, and 17 probes in VGG19. We train each probe with a learning rate of 0.01 and momentum 0.9 with $2\times$ decay after every 10 epochs. All probes are trained together (in parallel) for 50 epochs,  although we observe that they converge much faster. Finally, we train the aggregator model for 50 epochs with a fixed learning rate of 0.005, using the hold-out part of the dataset. We repeat all experiments five times and report averages.

\subsection{In-Distribution Data}
For in-distribution CIFAR10/100, experimental results are summarized in Figure \ref{fig:cifarind}. We observe a consistent improvement in LATES over temperature scaling across all metrics. In CIFAR10, while the accuracy does not improve, LATES improves calibration. In addition, the advantage of LATES compared to temperature scaling becomes bigger  in the low-data regime, as demonstrated in Figure \ref{fig:lowdata}.

In ImageNet-ILSVRC2012, we compare LATES against temperature scaling using RESNET50 to evaluate whether the improvement of LATES extends to much bigger datasets. As shown in Table \ref{table:ind-imagenet}, we observe a significant improvement in calibration in LATES (e.g. ECE is reduced from 0.06 to 0.023 and NLL is reduced from 1.16 to 1.04). In addition, LATES boosts accuracy as well.

\begin{table}
\scriptsize
\caption{A comparison between LATES (abbreviated L), temperature scaling (abbreviated T) on ImageNet-ILSVRC2012 \cite{deng2009imagenet} \cite{Krizhevsky09learningmultiple}. We observe a significant improvement in calibration while also boosting accuracy.}\vspace{1mm}
\label{table:ind-imagenet}
\begin{tabularx}{\columnwidth}{@{}llXXXXX@{\hspace{0.1em}}}
\toprule
&&
  \multicolumn{1}{l}{ECE $\downarrow$} &
  \multicolumn{1}{l}{NLL $\downarrow$} &
  \multicolumn{1}{l}{Brier $\downarrow$} &
  \multicolumn{1}{l}{ACC $\uparrow$} &
  \multicolumn{1}{l}{AUC $\uparrow$}
    \\ \midrule
    \phantom{v000} &
  \begin{tabular}[l]{@{}l@{}} \sc t \\ \bf \sc l
    \end{tabular} & \begin{tabular}[c]{@{}c@{}}   0.060 \\  \bf 0.023 \end{tabular}
    &\begin{tabular}[c]{@{}c@{}}   1.156 \\   \bf 1.039 \end{tabular}
    &\begin{tabular}[c]{@{}c@{}}   -0.629 \\ \bf -0.644 \end{tabular}
    &\begin{tabular}[c]{@{}c@{}}   0.738 \\  \bf 0.745 \end{tabular}
    &\begin{tabular}[c]{@{}c@{}}  \bf 0.991  \\ \bf 0.991 \end{tabular} \\ \bottomrule
\end{tabularx}
\end{table}

\begin{figure}
    \centering
    \includegraphics[width=\columnwidth]{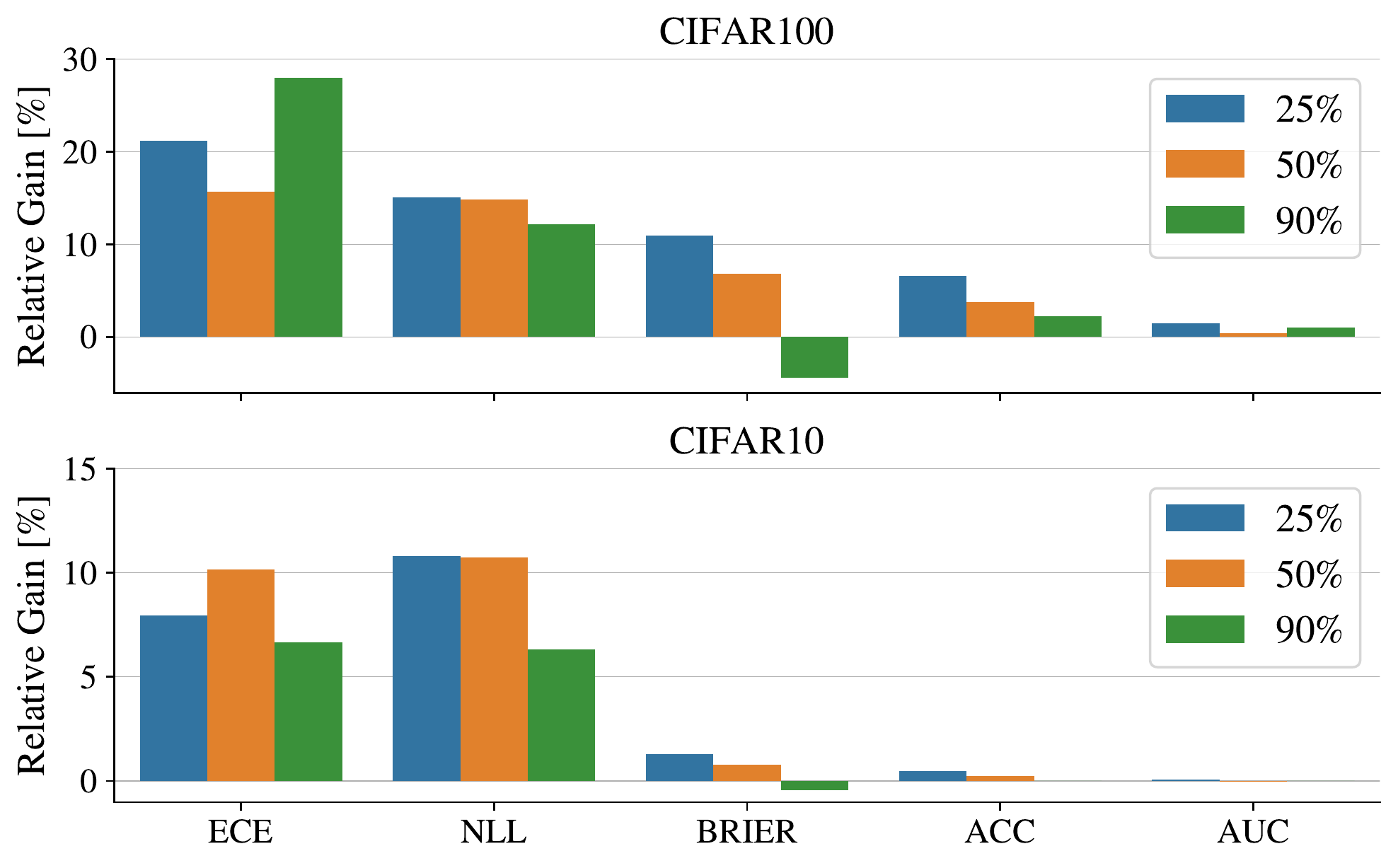}
    \caption{A summary of the relative gain of LATES compared to temperature scaling when the model is trained on 25\%, 50\%, and 90\% of the standard training splits while the temperatures (in both temperature scaling and LATES) are trained on the same 10\% hold-out dataset. Here, ``relative gain'' is the percentage-wise improvement in the metric. Negative values indicate a drop in performance compared to temperature scaling. In general, the advantage of LATES tends to increase for small data sizes.}
    \label{fig:lowdata}
\end{figure}

Nevertheless, only a single temperature parameter is trained in temperature scaling. By contrast, LATES trains multiple temperatures  (one for each probe). We use this observation to evaluate if temperature scaling can yield better results when more data is used when training the original model. We vary the hold-out data split to make up $(2\%, 4\%, 6\%, 8\%, 10\%)$ of the standard training split in both CIFAR10 and CIFAR100 and compare performance across all metrics.  We do not observe any improvement in temperature scaling in the latter case. In fact, calibration tends to get \emph{worse} as expected, since the temperature parameter is now trained on less data. For example, a reduction in the size of the hold-out dataset increases the ECE in temperature scaling in about 65\% of the cases. On the other hand, LATES improves \emph{both} accuracy and calibration.


\begin{table*}
\scriptsize
\centering
\caption{Statistical significance tests using Analysis of Variance. Unlike the non-parametric signed rank test in Table \ref{table:stat}, ANOVA does not report statistical significance in CIFAR100 in terms of ECE, and agrees with the signed ranked test in all other cases. }\vspace{1mm}\label{table:anova}
\begin{tabularx}{2\columnwidth}{@{}XXXXXX@{\hspace{0.1em}}}
\toprule
Dataset
& ECE & NLL & Brier & ACC & AUC
    \\ \midrule
 CIFAR100
  & $F(1,16)=2.31$, 
    & $F(1,16)=63.4$ & $F(1,16)=25.6$  &$F(1,16)=108.7$  & $F(1,16)=29.1$  \\
    & $p=0.148 $ & $p<.001$ & $p<.001$ & $p=.001$ & $p<.001$\\ \midrule
 CIFAR10 
  & $F(1,16)=70.5$, 
    & $F(1,16)=39.5$ & $F(1,16)=38.8$  &$F(1,16)=7.72$  & $F(1,16)=22.6$  \\
    & $p<.001$ & $p<.001$ & $p<.001$ & $p=.013$ & $p<.001$\\ \midrule
 ImageNet 
  & $F(1,16)=62.8$, 
    & $F(1,16)=84.6$ & $F(1,16)=41.52$  &$F(1,16)<1$  & $F(1,16)=56.6$  \\
    & $p=.001$ & $p<.001$ & $p<.001$ & $p=.57$ & $p<.001$
    \\ \bottomrule
\end{tabularx}
\end{table*}

\begin{table*}
\footnotesize
\centering
\caption{Evalutation on CIFAR10/100 using Wide-ResNet-28-10 \cite{zagoruyko2016wide} with Monte Carlo dropout \cite{gal2016dropout}. Combining LATES with MC Dropout matches previously-reported state-of-the-art results \cite{nado2021uncertainty}.}
\label{table:mc-cifar}
\begin{tabularx}{6in}{@{}cclXXXXX@{}}
\toprule
   &Dropout
   &
   &
  ECE $\downarrow$ &
  NLL $\downarrow$&
  Brier $\downarrow$&
  ACC $\uparrow$ &
  AUC $\uparrow$ \\ \midrule

  {\centering CIFAR10 }&0.1&
  \begin{tabular}[c]{@{}l@{}}\phantom{+} MCD\\+ \em LATES\end{tabular} &
  \begin{tabular}[c]{@{}c@{}}0.023  $\pm$ 0.001\\ {\bf0.010 $\pm$ 0.001}\end{tabular} &
  \begin{tabular}[c]{@{}c@{}}0.160 $\pm$ 0.004\\ {\bf0.144 $\pm$ 0.003}\end{tabular} &
  \begin{tabular}[c]{@{}c@{}}-0.932 $\pm$ 0.001\\ {\bf-0.935 $\pm$ 0.001}\end{tabular} &
  \begin{tabular}[c]{@{}c@{}}{\bf0.958 $\pm$ 0.001}\\ 0.957 $\pm$ 0.001\end{tabular} &
  \begin{tabular}[c]{@{}c@{}}0.995 $\pm$ 0.000\\ {\bf0.997 $\pm$ 0.000}\end{tabular} \\ \\
 &
  0.2 &
  \begin{tabular}[c]{@{}l@{}}\phantom{+} MCD\\+ \em LATES\end{tabular} &
  \begin{tabular}[c]{@{}c@{}}0.019 $\pm$ 0.001\\ {\bf0.0045 $\pm$ 0.000}\end{tabular} &
  \begin{tabular}[c]{@{}c@{}}0.151 $\pm$ 0.002\\ {\bf0.144 $\pm$ 0.001}\end{tabular} &
  \begin{tabular}[c]{@{}c@{}}-0.929 $\pm$ 0.001\\ {\bf-0.930 $\pm$ 0.000}\end{tabular} &
  \begin{tabular}[c]{@{}c@{}}{\bf0.954 $\pm$ 0.001}\\ 0.953 $\pm$ 0.001\end{tabular} &
  \begin{tabular}[c]{@{}c@{}}0.996 $\pm$ 0.000\\ {\bf0.998 $\pm$ 0.000}\end{tabular} \\ \midrule

CIFAR100 &
  0.1 &
  \begin{tabular}[c]{@{}l@{}}\phantom{+} MCD\\ \em+ LATES\end{tabular} &
  \begin{tabular}[c]{@{}c@{}}0.044 $\pm$ 0.002\\ {\bf0.033 $\pm$ 0.002}\end{tabular} &
  \begin{tabular}[c]{@{}c@{}}0.754  $\pm$ 0.002\\ {\bf0.751 $\pm$ 0.002}\end{tabular} &
  \begin{tabular}[c]{@{}c@{}}-0.717 $\pm$ 0.002\\ {\bf-0.719 $\pm$ 0.002}\end{tabular} &
  \begin{tabular}[c]{@{}c@{}}{\bf0.802 $\pm$ 0.002}\\ {\bf0.802 $\pm$ 0.002}\end{tabular} &
  \begin{tabular}[c]{@{}c@{}}0.985 $\pm$ 0.000\\ {\bf0.988 $\pm$ 0.000}\end{tabular}  \\ \\
 &
  0.2 &
  \begin{tabular}[c]{@{}l@{}}\phantom{+} MCD\\ \em+ LATES\end{tabular} &
  \begin{tabular}[c]{@{}c@{}}0.035 $\pm$ 0.003\\ {\bf0.016 $\pm$ 0.002}\end{tabular} &
  \begin{tabular}[c]{@{}c@{}}0.709 $\pm$ 0.005\\ {\bf0.716 $\pm$ 0.004}\end{tabular} &
  \begin{tabular}[c]{@{}c@{}}{\bf-0.718 $\pm$ 0.002}\\ -0.716 $\pm$ 0.002\end{tabular} &
  \begin{tabular}[c]{@{}c@{}}{\bf0.798 $\pm$ 0.002}\\ 0.794 $\pm$ 0.002\end{tabular} &

  \begin{tabular}[c]{@{}c@{}}{0.988 $\pm$ 0.000}\\ \bf{0.991 $\pm$ 0.001}\end{tabular}
\\\bottomrule
  
\end{tabularx}
\end{table*}

\begin{figure*}
    \centering
    \includegraphics[width=2\columnwidth]{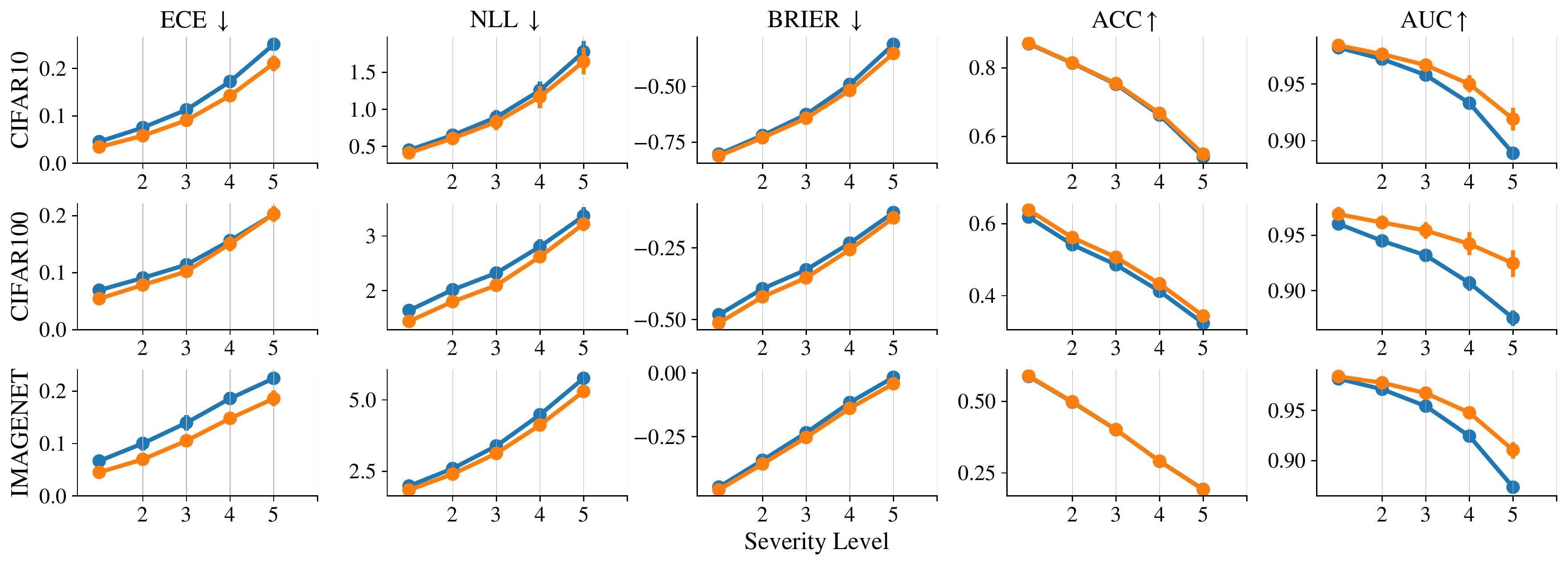}
    \caption{A summary of performance results for OOD data, averaged across corruption types and architectures. LATES (shown in {\color{brown}\bf amber}) yields an improvement over temperature scaling (shown in {\color{blue}\bf blue} in all performance metrics, particularly for large distribution shifts.}
    \label{fig:ood_curves}
\end{figure*}

\begin{figure}
    \centering
    \includegraphics[width=\columnwidth]{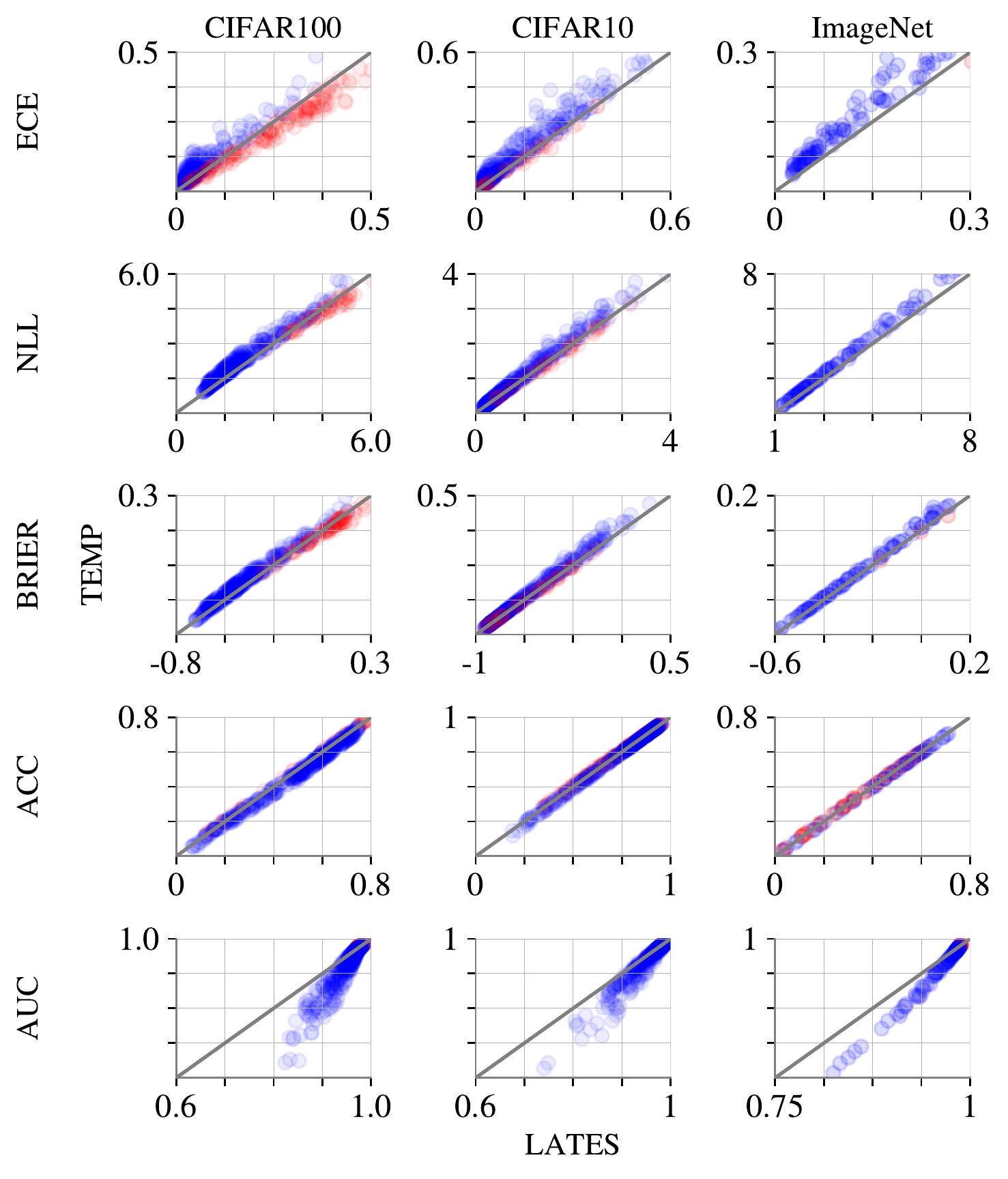}
    \caption{A scatter plot summarizing all experiments on OOD data for CIFAR100, CIFAR10, and ImageNet datasets. A blue marker indicates an experiment where LATES outperforms temperature scaling. Darker regions indicate more experiments. In general, LATES leads to an improvement across all metrics, but the improvement is particularly notable for AUC. One  exception is improving accuracy in ImageNet, in which LATES seems to have little effect. See Table \ref{table:stat} for statistical significance tests.}
    \label{fig:ood_scatter}
\end{figure}

\subsection{Out-of-Distribution Data}
Next, we evaluate LATES in the OOD setting, where various types of corruptions are added to the test data. For that, we follow the corruption types and five severity levels available in the Uncertainty Baselines Benchmark~\cite{nado2021uncertainty}. Figure \ref{fig:ood_curves} summarizes the results. We observe, again, an improvement in LATES compared to temperature scaling, particularly for large distribution shifts (i.e. when the corruption severity level is large). In particular, the improvement in LATES with respect to AUC is quite significant in all three datasets.

\subsection{Statistical Analysis}\label{sect:stat}
To verify statistical significance, we use  the non-parametric Wilcoxon signed rank test for comparing  LATES with temperature scaling \cite{demvsar2006statistical}. We use all corruption types at all severity levels across all architectures and random seeds.
Table \ref{table:stat} summarizes the main results. We observe that the improvement in LATES compared to temperature scaling is statistically significant at the 99.9\% confidence level in all metrics and datasets; nearly all $p$ values are below $10^{-9}$. The only exception is accuracy in ImageNet, in which there is no statistically significant difference. However, as shown in Figure \ref{fig:ood_scatter}, LATES improves AUC in ImageNet significantly under distribution shift. In Table \ref{table:anova}, we provide a different analysis of statistical significance using the Analysis of Variance (ANOVA). 

\begin{table}
\scriptsize

\caption{A summary of $p$-values using the non-parametric Wilcoxon signed rank test for comparing  LATES with temperature scaling. All improvements in LATES are statistically significant with $p$-values less than $10^{-9}$, with the exception of improving accuracy in ImageNet in which there is no statistically significant difference. Since the $p$ values are quite small, statistical significance remains valid even after correcting for multiple hypothesis testing using Holm's step down procedure \cite{demvsar2006statistical}.}\vspace{1mm}\label{table:stat}
\begin{tabularx}{\columnwidth}{@{}lXXXXX@{\hspace{0.1em}}}
\toprule
Dataset
& ECE & NLL & Brier & ACC & AUC
    \\ \midrule
 CIFAR100 &  $<10^{-9}$
    & $<10^{-9}$ & $<10^{-9}$ &$<10^{-9}$ & $<10^{-9}$  \\ 
 CIFAR10 &  $<10^{-9}$
    & $<10^{-9}$ & $<10^{-9}$ &$<10^{-9}$ & $<10^{-9}$  \\ 
 ImageNet &  $<10^{-9}$
    & $<10^{-9}$ & $<10^{-9}$ &$0.18$ & $<10^{-9}$  \\ \bottomrule
\end{tabularx}
\end{table}

\subsection{Complementing Monte Carlo Dropout}
LATES can also be combined with other existing techniques to improve their performance further. We illustrate this using the popular Monte Carlo Dropout \cite{gal2016dropout}, in which the method matches with state-of-the-art results as reported by the Uncertainty Baselines Benchmark~\cite{nado2021uncertainty}.

In Monte Carlo dropout, multiple predictions are generated for the same instance $\xb\in\mathcal{X}$ by adding dropout layers to the model and using them at \emph{both} training and test time. Then, the model's uncertainty can be estimated by averaging the predictions.
In our experiments, we use the implementation of the Monte Carlo dropout available at the Uncertainty Baselines Benchmark~\cite{nado2021uncertainty}. We use the same implementation details and hyper-parameters suggested by the benchmark, which uses the Wide-ResNet-28-10 architecture, and combine it with LATES. We also test on different values of the dropout rate. Final results are in Table~\ref{table:mc-cifar}. 

The state-of-the-art results on CIFAR10 reported in the Uncertainty Baseline Benchmark uses the Spectral Normalized Neural Gaussian Processes (SNGP)~\cite{liu2020simple} when combined with Augmentation Mixup (AugMix)~\cite{hendrycks2019augmix} and achieves ECE of 0.0045 at 96\% accuracy. We achieve the same ECE at 95.7\% accuracy with very limited data augmentation (only left/right flipping). On CIFAR100 we achieve near SOTA ECE with much less parameters compared to the best methods that use an ensemble of SNGP models combined with Monte Carlo Dropout. However when comparing with the same number of parameters, we achieve less ECE for the same accuracy. Specifically, our ECE in CIFAR100 of 0.016 with a 35M-parameter model is below the best ECE of 0.017 reported in the benchmark that uses a comparable model size of 35M parameters \cite{nado2021uncertainty}. The accuracy is 79\% in both cases.

\section{Conclusion}
In this paper, we introduce layer-stack temperature scaling (LATES); a simple, effective technique to improve both the calibration and accuracy of deep neural networks. It is inspired by several lines of work, which relate example difficulty to the activations at the intermediate layers of the neural network. We motivate LATES theoretically and demonstrate that it leads to a significant improvement over temperature scaling both in- and out-of-distribution, especially in the low-data regime. In particular, LATES provides a significant improvement in AUC under large distribution shifts. We support these conclusions by a comprehensive statistical analysis. Finally, we demonstrate that LATES can match state-of-the-art results on CIFAR10 and CIFAR100 when combined with Monte Carlo Dropout.

{\small
\bibliographystyle{ieee_fullname}
\bibliography{egbib}
}

\appendix
\end{document}